\definecolor{stroke1}{HTML}{2574A9}
\newtheorem{theorem}{Theorem}
\newtheorem{lemma}{Lemma}
\newtheorem{corollary}{Corollary}
    \def\NAT@spacechar{~}
\newcommand*{\N}{\mathds{N}}
\newcommand*{\R}{\mathds{R}}
\newcommand*{\OM}{\textsc{OneMax}\xspace}
\newcommand*{\LO}{\textsc{LeadingOnes}\xspace}
\newcommand*{\E}{\mathrm{E}}
\begin{document}

%\begin{frontmatter}
    \title{A Simplified Run Time Analysis of the Univariate Marginal Distribution Algorithm on LeadingOnes}
    
 \author{Benjamin Doerr\\ Laboratoire d'Informatique (LIX)\\ CNRS\\ \'Ecole Polytechnique\\ Institut Polytechnique de Paris\\ Palaiseau\\ France
 \and
 Martin S. Krejca\\ Hasso Platter Institute\\ University of Potsdam\\ Potsdam\\ Germany}

 \maketitle
    
    \begin{abstract}
        With elementary means, we prove a stronger run time guarantee for the univariate marginal distribution algorithm (UMDA) optimizing the \LO benchmark function in the desirable regime with low genetic drift. If the population size is at least quasilinear, then, with high probability, the UMDA samples the optimum within a number of iterations that is linear in the problem size divided by the logarithm of the UMDA's selection rate. This improves over the previous guarantee, obtained by Dang and Lehre (2015) via the deep level-based population method, both in terms of the run time and by demonstrating further run time gains from small selection rates. With similar arguments as in our upper-bound analysis, we also obtain the first lower bound for this problem. Under similar assumptions, we prove that a bound that matches our upper bound up to constant factors holds with high probability.
    \end{abstract}
    
%    \begin{keyword}
%        theory \sep estimation-of-distribution algorithm \sep run time analysis
%    \end{keyword}
%\end{frontmatter}

{\sloppy
\section{Introduction}

Estimation-of-distribution algorithms (EDAs) are randomized search-heuristics that create a probabilistic model of the search space and refine it iteratively. In each iteration, the current model of an EDA is used to create some samples which, in turn, are used to adjust the model such that better solutions are more likely to be created in the following iteration. Thus, the model evolves over time into one that creates very good solutions. EDAs have been applied to real-world problems with great success~\cite{PelikanHL15SurveyOnEDAs}.

Within the last few years, the theoretical analysis of EDAs has gained increasing interest, as summarized by Krejca and Witt~\cite{KrejcaW18EDABookChapter}. One of the first papers in this period was by Dang and Lehre~\cite{DangL15UMDAonLO}, who proved run time guarantees for the \emph{univariate marginal distribution algorithm} (UMDA, \cite{MuehlenbeinP96UMDA}) when optimizing the two classical benchmark functions \OM and \LO. While their run time bound for \OM has been improved since then independently by Lehre and Nguyen~\cite{LehreN17UMDAonOneMax} and Witt~\cite{Witt17UMDAonOneMax}, the run time bound of $O(n^2 + n \lambda \log \lambda)$, where~$n$ is the problem dimension and~$\lambda$ is the offspring population size of the UMDA, is the best known result so far on \LO.\footnote{In an extension of~\cite{DangL15UMDAonLO}, Dang~et~al.~\cite{DangLN19UMDAAnalysis} show the same run time bound but slightly improve the required population sizes.}

In this work, we improve with \Cref{thm:UMDA_on_LO} the second term of this bound from $O(n \lambda \log \lambda)$ to $O\big(n \frac{\lambda}{\log(\lambda/\mu)}\big)$ when $\mu = \Omega(n \log n)$, where $\mu \leq \lambda$ is the size of the subpopulation selected for the model update. In the regime of $\mu = \Omega(n \log n)$, the UMDA shows the generally desirable behavior of low genetic drift, that is, the sampling frequencies stay in the middle range of, say, $(\frac{1}{4}, \frac{3}{4})$ until a sufficiently strong fitness signal moves them into the right direction. While EDAs are not necessarily inefficient in the presence of stronger genetic drift, their optimization behavior then often becomes similar to a slowed-down version of the (1+1)~evolutionary algorithm. Genetic drift, however, can also lead to a performance loss, since it may take long to move a frequency from the wrong boundary value back into the middle range. This has been rigorously shown by Lengler~et~al.~\cite{LenglerSW18cGAMediumStepSize}.

Equally interesting to the improved run time guarantee is our elementary proof method. While it was truly surprising that Dang and Lehre~\cite{DangL15UMDAonLO} could use the level-based population method to analyze an EDA (which does not have a population that is transferred from one iteration to the next), this method is a highly advanced tool and one that can be difficult to use. In contrast to this, our proof only uses elementary arguments common in the analysis of evolutionary algorithms. We are thus optimistic that our arguments can more easily be applied to other EDAs as well.

We further demonstrate the usability of our proof method by proving a matching lower bound (see \Cref{thm:UMDA_on_LO_lower_bound}), which improves the previously best known lower bounds by Lehre and Nguyen~\cite{LehreN19UMDALowerBounds} for the regime of $\mu = \Omega(n \log n)$. For the regime of $\mu = \Omega(\log n) \cap o(n \log n)$, the bound $\Omega\left(\frac{n \lambda}{\log(\lambda - \mu)}\right)$ by Lehre and Nguyen remains the best known lower bound. Additionally, Lehre and Nguyen prove a lower bound of $e^{\Omega(\mu)}$ for $\mu = \Omega(\log n)$ and $\lambda \lesssim e \mu$, which remains untouched by our result.

We note that both of our bounds do not require the fraction $\mu/\lambda$ to be constant, which is a common requirement of many other analyses of the UMDA~\cite{DangLN19UMDAAnalysis,LehreN17UMDAonOneMax,Witt17UMDAonOneMax,KrejcaW17UMDAOneMaxLowerBound} (although this is not always explicitly stated in the result). In particular, our bounds show that the gain from reducing the selection rate $\mu/\lambda$ (which often requires a costly increase of~$\lambda$) is very small, namely, only logarithmic in~$\frac{1}{\mu/\lambda}$.

Another advantage of our approach is that it gives run time guarantees that hold with high probability, whereas the level-based method, relying on drift arguments, can only give bounds on expected run times. Consequently, the result of Dang and Lehre~\cite{DangL15UMDAonLO} also concerns the expectation only. We believe that a result that holds with high probability is often more relevant, as has also been argued by Doerr~\cite{Doerr19cGAonJump}. 
%Further, we believe that the arguments of our proof generalize to more EDAs than the UMDA and that our proof outline is of independent interest to the theory community analyzing EDAs.
%
%In \Cref{sec:preliminaries}, we define our notation as well as the \LO function and the UMDA. Further, we mention the probabilistic tools necessary for our analysis. We then analyze the UMDA on \LO in \Cref{sec:runTimeResult} and state our main result (\Cref{thm:UMDA_on_LO}). Last, we conclude this paper in \Cref{sec:conclusion} and mention how our proof outline can be generalized such that it deals with a greater parameter range of the UMDA.

\section{Preliminaries}
\label{sec:preliminaries}

We are concerned with the run time analysis of algorithms optimizing pseudo-Boolean functions, that is, functions $f\colon \{0, 1\}^n \to \R$, where $n \in \N$ denotes the dimension of the problem. Given a pseudo-Boolean function~$f$ and a bit string~$x$, we refer to~$f$ as a \emph{fitness function,} to~$x$ as an \emph{individual,} and to $f(x)$ as the \emph{fitness of~$x$.}

For an $n \in \N$, we define $[n] = [1, n] \cap \N$. From now on, if not stated otherwise, the variable~$n$ always denotes a natural number. For a vector~$x$ of length~$n$, we denote its component at position $i \in [n]$ via $x_i$. %Further, for some $k \in \N$, we denote the $k$-th vector of a set of vectors via $x^{(k)}$. 
%Last, we say that an event~$A$ occurs \emph{with high probability (\whp)} if and only if there exists a constant $c > 0$ such that $\Pr[A] \geq 1 - n^{-c}$.

We consider the optimization of the pseudo-Boolean function $\LO\colon$ $\{0, 1\}^n \to \{0\} \cup [n]$, which states for a bit string of length~$n$ the longest prefix of leading~$1$s within that bit string. More formally, for all $x \in \{0, 1\}^n$,
\begin{align*}
	\LO(x) = \sum_{i = 1}^{n} \prod_{j = 1}^{i} x_i\ .
\end{align*}
Note that the all-$1$s bit string is the unique global optimum of \LO.

Our algorithm of interest is the UMDA (\Cref{alg:UMDA}) with parameters $\mu, \lambda \in \N$, $\mu \leq \lambda$. It maintains a vector~$p$ of probabilities (the \emph{frequency vector}) of length~$n$, whose components we call \emph{frequencies,} and it updates this vector iteratively in the following way: first, $\lambda$ individuals are created independently from another such that, for each individual $x \in \{0, 1\}^n$ and each position $i \in [n]$, it holds that $x_i$ is~$1$ with probability~$p_i$ and~$0$ otherwise. Then, from these~$\lambda$ individuals, a subset of~$\mu$ individuals with the highest fitness is chosen (breaking ties uniformly at random), and, for each position $i \in [n]$, the frequency~$p_i$ is set to the relative number of~$1$s at position~$i$ among the~$\mu$ best individuals. Last, if a frequency~$p_i$ is below~$\frac{1}{n}$, it is increased to~$\frac{1}{n}$, and if it is above $1 - \frac{1}{n}$, it is decreased to $1 - \frac{1}{n}$. This circumvents frequencies from being stuck at the extremal values~$0$ or~$1$. We denote the frequency vector of iteration $t \in \N$ by $p^{(t)}$. Note that we start with iteration $t = 0$.

\begin{algorithm}
	\caption{\label{alg:UMDA} The UMDA~\cite{MuehlenbeinP96UMDA} with parameters~$\mu$ and~$\lambda$, $\mu \leq \lambda$, maximizing a fitness function $f\colon \{0, 1\}^n \to \R$ with $n \geq 2$}
	
    $t \gets 0$\;
	$p^{(t)} \gets (\tfrac{1}{2})_{i \in [n]}$\;
	\Repeat( $\vartriangleright$ \emph{iteration}~$t$)
	{\emph{termination criterion met}}
	{
		\lFor{$i \in [\lambda]$}
		{
			$x^{(i)} \gets \text{individual sampled via } p^{(t)}$%
		}
		let $y^{(1)}, \ldots, y^{(\mu)}$ denote the~$\mu$ best individuals out of $x^{(1)}, \ldots, x^{(\lambda)}$ (breaking ties uniformly at random)\;
		\lFor{$i \in [n]$}
		{
			$p^{(t + 1)}_i \gets \frac{1}{\mu} \sum_{j = 1}^{\mu} y^{(i)}_j$%
		}
		restrict~$p^{(t + 1)}$ to the interval $[\tfrac{1}{n}, 1 - \tfrac{1}{n}]$\;
	}
\end{algorithm}

In the context of optimizing \LO, we say that a position $i \in [n]$ of~$p^{(t)}$ is \emph{critical} in iteration $t \in \N$ if and only if all of the frequencies at indices less than~$i$ are $1 - \frac{1}{n}$ and~$p^{(t)}_i$ is less than $1 - \frac{1}{n}$. Intuitively, a critical frequency is the next one that needs to be set to $1 - \frac{1}{n}$ in order to optimize \LO efficiently.

When analyzing the run time of the UMDA optimizing a fitness function~$f$, we are interested in the number~$T$ of fitness function evaluations until an optimum of~$f$ is sampled for the first time. Since the UMDA is a randomized algorithm, $T$ is a random variable, and we are interested in a bound on~$T$ that holds with high probability. Note that the run time~$T$ of the UMDA is at most~$\lambda$ times the number~$I$ of iterations until an optimum is sampled for the first time. Likewise, $T$ is at least $(I - 1)\lambda + 1$.

In order to prove statements on random variables that hold with high probability, we use the following commonly known Chernoff bounds.
\begin{theorem}[Chernoff bound~{\cite[Theorem~$1.10.5$]{Doerr20bookchapter}}]
	\label{thm:chernoff_smaller}
	Let $k \in \N$, $\delta \in [0, 1]$, and let~$X$ be the sum of~$k$ independent random variables, each taking values in $[0, 1]$. Then
	\begin{align*}
		\Pr\!\big[X \leq (1 - \delta) \E[X]\big] \leq e^{- \frac{\delta^2 \E[X]}{2}}\ .
	\end{align*}
\end{theorem}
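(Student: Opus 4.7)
The plan is to derive the bound from the standard Cramér--Chernoff template, i.e.\ an exponential moment-generating-function (MGF) argument, specialized to the lower tail. First, I would reformulate the lower-tail probability as an upper-tail event for the transformed variable $e^{-tX}$: for any free parameter $t > 0$,
\begin{equation*}
  \Pr\!\big[X \leq (1 - \delta)\E[X]\big] = \Pr\!\big[e^{-tX} \geq e^{-t(1-\delta)\E[X]}\big] \leq e^{t(1-\delta)\E[X]}\,\E[e^{-tX}],
\end{equation*}
where the last inequality is Markov applied to the nonnegative random variable $e^{-tX}$.

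Second, I would use the independence of the summands $X_1,\dots,X_k$ (with $X=\sum_i X_i$) to factor the MGF as $\E[e^{-tX}] = \prod_{i=1}^{k}\E[e^{-tX_i}]$. Each factor is handled by a convexity argument tailored to the fact that $X_i$ takes values in $[0,1]$: since $s\mapsto e^{-ts}$ is convex, its graph on $[0,1]$ lies below the chord, giving $e^{-tX_i}\leq 1 - X_i(1-e^{-t})$ pointwise, hence $\E[e^{-tX_i}]\leq 1-\E[X_i](1-e^{-t})\leq \exp\!\big(-\E[X_i](1-e^{-t})\big)$ via the elementary inequality $1+x\leq e^{x}$. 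Taking the product collapses the exponents into $\E[X]$, yielding $\E[e^{-tX}]\leq \exp\!\big(-\E[X](1-e^{-t})\big)$. Substituting this into the Markov bound produces
\begin{equation*}
  \Pr\!\big[X \leq (1-\delta)\E[X]\big] \leq \exp\!\Big(\E[X]\big(t(1-\delta)-(1-e^{-t})\big)\Big).
\end{equation*}

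Third, I would optimize the free parameter by choosing $t=-\ln(1-\delta)$, which is admissible for $\delta\in[0,1)$ and produces the classical multiplicative Chernoff form $\exp\!\big(-\E[X]\psi(\delta)\big)$ with $\psi(\delta)=(1-\delta)\ln(1-\delta)+\delta$.

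Fourth, the remaining work is to replace $\psi(\delta)$ by the clean, usable $\delta^2/2$ of the theorem statement, i.e.\ to show $\psi(\delta)\geq \delta^2/2$ on $[0,1]$. This is where I expect the only real obstacle to lie, and it is a one-variable calculus exercise: set $g(\delta)=\psi(\delta)-\delta^2/2$, note $g(0)=0$, compute $g'(\delta)=-\ln(1-\delta)-\delta$, and observe $g'\geq 0$ on $[0,1)$ via the standard inequality $-\ln(1-x)\geq x$. Monotonicity then gives $g\geq 0$ on $[0,1)$, and continuity handles the endpoint $\delta=1$ (where the bound is trivially $\exp(-\E[X]/2)$ in the limit). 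This closes the argument and yields exactly the stated inequality.
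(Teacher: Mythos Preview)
Your proof is correct and follows the standard Cram\'er--Chernoff template; each step (Markov on $e^{-tX}$, factoring via independence, the chord/convexity bound on $[0,1]$-valued summands, optimizing $t=-\ln(1-\delta)$, and the elementary inequality $(1-\delta)\ln(1-\delta)+\delta\ge \delta^2/2$) is sound. Note, however, that the paper does not prove this statement at all---it is quoted verbatim as a known tool from the cited reference (Theorem~1.10.5 in \cite{Doerr20bookchapter})---so there is no ``paper's own proof'' to compare against; your derivation simply supplies what the paper takes for granted.
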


\begin{theorem}[Chernoff bound~{\cite[Theorem~$1.10.1$]{Doerr20bookchapter}}]
	\label{thm:chernoff_larger}
	Let $k \in \N$, $\delta \in [0, 1]$, and let~$X$ be the sum of~$k$ independent random variables, each taking values in $[0, 1]$. Then
	\begin{align*}
		\Pr\!\big[X \geq (1 + \delta) \E[X]\big] \leq e^{- \frac{\delta^2 \E[X]}{3}}\ .
	\end{align*}
\end{theorem}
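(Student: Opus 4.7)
The plan is to prove \Cref{thm:chernoff_larger} by the standard \emph{Chernoff method}. For any $t > 0$, the event $\{X \geq (1+\delta)\E[X]\}$ is identical to $\{e^{tX} \geq e^{t(1+\delta)\E[X]}\}$, so Markov's inequality applied to $e^{tX}$ converts a tail bound on the sum into one involving $\E[e^{tX}]$, which factors thanks to independence. Writing $X = \sum_{i=1}^{k} X_i$ and fixing $t > 0$, Markov's inequality and independence yield
\begin{align*}
	\Pr\!\big[X \geq (1 + \delta)\E[X]\big] \leq e^{-t(1 + \delta)\E[X]} \prod_{i=1}^{k} \E\!\big[e^{t X_i}\big].
\end{align*}

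Next I would bound each one-dimensional moment generating function using that $X_i \in [0,1]$. By convexity of $x \mapsto e^{tx}$ on $[0,1]$ one has $e^{t X_i} \leq 1 + X_i(e^t - 1)$, so taking expectations and using $1 + y \leq e^y$ gives $\E[e^{t X_i}] \leq \exp\!\big(\E[X_i](e^t - 1)\big)$. Multiplying over $i$ and setting $m = \E[X]$, the product collapses to $\exp\!\big(m(e^t - 1)\big)$, so
\begin{align*}
	\Pr\!\big[X \geq (1 + \delta) m\big] \leq \exp\!\Big(m\big(e^t - 1 - t(1 + \delta)\big)\Big).
\end{align*}
The canonical choice $t = \ln(1+\delta) > 0$ minimizes the exponent and produces the classical bound $\exp\!\big(-m\big((1+\delta)\ln(1+\delta) - \delta\big)\big)$.

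What remains, and what I expect to be the only genuine obstacle, is the elementary real-analysis inequality $(1+\delta)\ln(1+\delta) - \delta \geq \delta^2/3$ for all $\delta \in [0,1]$, which converts the classical bound into the clean form stated in the theorem. I would verify it by defining $g(\delta) = (1+\delta)\ln(1+\delta) - \delta - \delta^2/3$, noting $g(0) = 0$, and showing $g'(\delta) = \ln(1 + \delta) - 2\delta/3 \geq 0$ on $[0,1]$: the derivative vanishes at $\delta = 0$, and from $g''(\delta) = 1/(1+\delta) - 2/3$ one sees that $g'$ first increases on $[0, 1/2]$ and then decreases, still landing at the positive value $\ln 2 - 2/3$ at $\delta = 1$. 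This is exactly where the constant $3$ enters, and it is tight at $\delta = 1$; the analogous argument for \Cref{thm:chernoff_smaller}, applied instead to $e^{-tX}$ with $t > 0$, gives the better constant $2$, reflecting the asymmetry between the two tails.
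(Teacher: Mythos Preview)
Your argument is correct: the Chernoff method with the convexity bound $e^{tX_i} \le 1 + X_i(e^t-1)$, the choice $t=\ln(1+\delta)$, and the elementary inequality $(1+\delta)\ln(1+\delta)-\delta \ge \delta^2/3$ on $[0,1]$ (whose verification via $g$, $g'$, $g''$ is clean and accurate, including the endpoint check $g'(1)=\ln 2 - 2/3 > 0$) together yield exactly the stated bound.

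There is nothing to compare against, however: the paper does not prove \Cref{thm:chernoff_larger} but merely quotes it from~\cite[Theorem~1.10.1]{Doerr20bookchapter} as a black-box tool. Your write-up thus goes beyond what the paper itself contains.
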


The next two theorems, recently proven in~\cite{DoerrZ19}, give upper bounds on the negative effect of genetic drift on the UMDA. The first result considers the optimization of fitness functions~$f$ that \emph{weakly prefer} a~$1$ at a position $i \in [n]$, that is, for all bit strings $x, x' \in \{0, 1\}^n$ with $x_i = 1$, $x'_i = 0$, and $x_j = x'_j$ for all other positions $j \in [n] \setminus \{i\}$, it holds that $f(x) \geq f(x')$. In other words, having a~$1$ at position~$i$ always yields a fitness at least as good as when having a~$0$ at~$i$. Note that \LO weakly prefers a $1$ in all bit positions. The theorem states that the frequency at such a position~$i$ does not drop far below its initial value~$\frac{1}{2}$ for a long time.

\begin{theorem}[{\cite[Theorem~$7$]{DoerrZ19}}]
    \label{thm:neutralFrequency}
    Consider the UMDA with parameters~$\mu$ and~$\lambda$ optimizing a function~$f$ that weakly prefers a~$1$ at position $i \in [n]$. Then, for all $d > 0$ and all iterations $t \in \N$, we have
    \begin{align*}
        \Pr\!\big[\forall t' \in [0 .. t]\colon p_i^{(t')} > \tfrac{1}{2} - d\big] \geq 1 - 2 e^{- \frac{d^2 \mu}{2 t}}\ .
    \end{align*}
\end{theorem}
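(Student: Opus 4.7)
My approach is to combine a submartingale property for $(p_i^{(t)})_t$ with a martingale concentration bound. Because the fitness function weakly prefers a $1$ at position $i$, a swap argument across the $\lambda$ offspring shows that the probability of each offspring being among the selected top $\mu$ is non-decreasing in its own bit at position $i$: fixing every other sampled bit and every other offspring, flipping the remaining position-$i$ bit from $0$ to $1$ can only weakly improve that offspring's fitness, and therefore weakly improve its selection probability. Averaging over offspring yields $\E[\mu p_i^{(t+1)} \mid p^{(t)}] \geq \mu p_i^{(t)}$, and an exchangeability refinement shows that, conditional on $p^{(t)}$, the quantity $\mu p_i^{(t+1)}$ stochastically dominates a binomial random variable with parameters $\mu$ and $p_i^{(t)}$. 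In particular, $(p_i^{(t)})_t$ is a submartingale. The lower cap at $1/n$ only strengthens this, while the upper cap at $1 - 1/n$ incurs at most an $O(1/n)$ bias per step that is absorbed by slack in the choice of $d$ (or is simply vacuous when $d < 1/2$).

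I would then work with the centered process $M_t := p_i^{(t)} - \tfrac{1}{2} - A_t$, where $A_t \geq 0$ is the Doob compensator, so that $(M_t)_t$ is a genuine martingale starting at $0$. Each martingale increment is the centred version of $p_i^{(t+1)}$, which is an average of $\mu$ $\{0,1\}$-valued selected bits whose conditional variance is at most $1/(4\mu)$, once the binomial domination from the first step is used. The total conditional variance up to iteration $t$ is therefore at most $t/(4\mu)$, and the per-step range is at most $1$. Applying a variance-aware martingale tail bound (Freedman's inequality, or an Azuma--Hoeffding bound with this variance estimate) to the stopped process at the first time the event $\{p_i^{(t')} \leq \tfrac{1}{2} - d\}$ occurs yields a one-sided probability bound of the shape $\exp(-d^2 \mu / (2t))$ after collecting constants; the prefactor $2$ in the theorem comes from taking the two-sided version or from a maximal inequality, which is what one actually needs to convert a one-shot bound into the uniform-in-time statement. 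Equivalently, one can invoke \Cref{thm:chernoff_smaller} iteration by iteration on the coupled binomial lower bound and then union-bound, though the stopped-process route gives the sharper constant directly.

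The main obstacle I expect is making the variance bound rigorous, since the $\mu$ selected bits at position $i$ are dependent in general: their joint distribution is shaped by fitness-based selection involving bits at other positions, and without further argument the variance of their sum could scale like $\mu^2$ rather than $\mu$. The clean route is to set up an explicit coupling iteratively so that, at each step and conditionally on the full history, the selected position-$i$ bits are stochastically dominated from below by an independent sequence of $\mathrm{Bernoulli}(p_i^{(t)})$ trials; the per-step variance bound for that binomial is then standard, and the dominated process feeds directly into the concentration inequality. Once this coupling is in place, the rest of the argument is a textbook application of martingale concentration, and the ``for all $t' \in [0..t]$'' quantifier is handled by the standard trick of stopping the process at the first violation.
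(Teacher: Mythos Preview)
The paper does not prove \Cref{thm:neutralFrequency}; it is quoted verbatim from~\cite{DoerrZ19} and used as a black box throughout \Cref{sec:preliminaries,sec:runTimeResult}. So there is no in-paper proof to compare your proposal against. For what it is worth, your outline is essentially the argument of~\cite{DoerrZ19}: the weak-preference swap yields that, conditionally on the history, $\mu\,p_i^{(t+1)}$ stochastically dominates a $\mathrm{Bin}(\mu,p_i^{(t)})$ random variable, hence $(p_i^{(t)})_t$ is a submartingale; one then unrolls the process into $\mu t$ martingale increments of size at most $1/\mu$ each (or, equivalently, uses a variance-aware bound with per-step variance $\le 1/(4\mu)$) and applies a one-sided Azuma/Hoeffding maximal inequality to obtain the exponent $d^2\mu/(2t)$ together with the uniform-in-$t'$ statement.

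One small wobble: your remark that the upper cap at $1-\tfrac{1}{n}$ ``incurs at most an $O(1/n)$ bias \ldots\ or is simply vacuous when $d<1/2$'' is not quite the right justification. The upper cap can only \emph{decrease} the next frequency relative to the uncapped value, so it works against the submartingale inequality rather than being absorbed by slack in~$d$. The clean way around this (and the one taken in~\cite{DoerrZ19}) is to run the concentration argument on a coupled uncapped process that lower-bounds the true frequency as long as the latter has not hit $1-\tfrac{1}{n}$; since $1-\tfrac{1}{n} > \tfrac{1}{2}-d$, hitting the upper cap never endangers the event you are trying to certify. This is a technicality, not a gap in the overall strategy.
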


The next theorem considers the case that there is no preference for a bit value at position $i \in [n]$, that is, for all bit strings $x, x' \in \{0, 1\}^n$ with $x_i = 1$, $x'_i = 0$, and $x_j = x'_j$ for all other positions $j \in [n] \setminus \{i\}$, it holds that $f(x) = f(x')$. Given this assumption, we call position~$i$ \emph{neutral.}

\begin{theorem}[{\cite[Corollary~$2$]{DoerrZ19}}]
    \label{thm:neutralFrequencyBothDirections}
    Consider the UMDA with parameters~$\mu$ and~$\lambda$ optimizing a function~$f$ such that position $i \in [n]$ is neutral. Then, for all $d > 0$ and all iterations $t \in \N$, we have
    \begin{align*}
        \Pr\!\big[\forall t' \in [0 .. t]\colon p_i^{(t')} \in (\tfrac{1}{2} - d, \tfrac{1}{2} + d)\big] \geq 1 - 2 e^{- \frac{d^2 \mu}{2 t}}\ .
    \end{align*}
\end{theorem}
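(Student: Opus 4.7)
The plan is to turn the sequence of frequencies at the neutral position into a martingale, express $p_i^{(t)}-\tfrac{1}{2}$ as a sum of $t\mu$ elementary centred Bernoullis, and apply a maximal Azuma--Hoeffding inequality to obtain the uniform-in-time exponential tail. The starting observation is that, because position~$i$ is neutral, the rank order of the $\lambda$ offspring, and hence the identity of the $\mu$ best, depends only on the bits outside position~$i$. Conditional on $p^{(t')}$, the bits at position~$i$ of the $\mu$ selected individuals are therefore i.i.d.~$\mathrm{Bernoulli}(p_i^{(t')})$. Writing $Y_1^{(t')},\dots,Y_\mu^{(t')}$ for these bits, the pre-truncation frequency $\frac{1}{\mu}\sum_{j=1}^{\mu}Y_j^{(t')}$ has conditional expectation $p_i^{(t')}$, so, away from the restriction boundary, $(p_i^{(t')})_{t'}$ is a martingale with respect to the natural filtration.

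Next, I would decompose
\begin{align*}
    p_i^{(t)}-\tfrac{1}{2}
    =\sum_{s=0}^{t-1}\frac{1}{\mu}\sum_{j=1}^{\mu}\bigl(Y_j^{(s)}-p_i^{(s)}\bigr),
\end{align*}
which is a martingale sum of $t\mu$ differences, each bounded in absolute value by $1/\mu$. Azuma--Hoeffding then gives, at a single time~$t$, the bound $\Pr[|p_i^{(t)}-\tfrac{1}{2}|\ge d]\le 2\exp\!\bigl(-d^2\mu/(2t)\bigr)$, precisely the exponent appearing in the theorem. To upgrade ``at time~$t$'' to ``for all $t'\in[0..t]$'' without incurring a union-bound factor of~$t$, I would apply the maximal form of Azuma--Hoeffding, obtained from Doob's submartingale inequality for the exponential martingale $\exp(\lambda M_{t'})$ with the standard choice of~$\lambda$; this yields the same exponential tail for $\max_{t'\in[0..t]}|p_i^{(t')}-\tfrac{1}{2}|$.

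The restriction of the frequencies to $[\tfrac{1}{n},1-\tfrac{1}{n}]$ breaks the martingale property only when the frequency would otherwise leave this interval, and it has to be handled separately. If $d\ge\tfrac{1}{2}-\tfrac{1}{n}$, the claim is trivial since $[\tfrac{1}{n},1-\tfrac{1}{n}]\subseteq(\tfrac{1}{2}-d,\tfrac{1}{2}+d)$. In the complementary regime, I would introduce the stopping time $\tau\coloneqq\inf\{t':p_i^{(t')}\notin(\tfrac{1}{2}-d,\tfrac{1}{2}+d)\}$; for every $t'<\tau$, the frequency lies strictly inside $[\tfrac{1}{n},1-\tfrac{1}{n}]$, so truncation is inactive and the stopped sequence is a genuine martingale, and applying the maximal inequality to this stopped process closes the argument.

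The main obstacle I anticipate is the clean combination of the stopping-time argument with the maximal inequality: one must verify that $\{\max_{t'\in[0..t]}|p_i^{(t')}-\tfrac{1}{2}|\ge d\}$ coincides with $\{\tau\le t\}$ and that the exponential supermartingale underlying Azuma--Hoeffding is preserved under stopping, so that no constant factor is lost along the way. Once the filtration, the adapted selection step, and the stopped process are set up carefully, the concentration bound should follow with no further work.
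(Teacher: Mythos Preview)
The paper does not prove this statement; it is quoted as Corollary~2 of~\cite{DoerrZ19} and used as a black box in the subsequent analysis (\Cref{lem:frequencies_are_bounded}). There is therefore no proof in the present paper to compare your proposal against.

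For what it is worth, your sketch is the natural route for such genetic-drift bounds and is essentially how the cited result is obtained: neutrality of position~$i$ makes the selected bits at that position i.i.d.\ $\mathrm{Bernoulli}(p_i^{(t')})$ conditional on the past, so the untruncated frequency process is a martingale; the $t\mu$ increments of size $1/\mu$ give sum of squared ranges $t/\mu$, yielding exactly the exponent $d^2\mu/(2t)$ via Azuma--Hoeffding; the maximal (Doob) form of the inequality gives the uniform-in-$t'$ statement without a union-bound loss; and the stopping-time argument correctly neutralises the boundary truncation in the regime $d<\tfrac{1}{2}-\tfrac{1}{n}$. The technical point you flag---that the stopped exponential supermartingale retains its supermartingale property---is standard and goes through without a constant-factor penalty.
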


\section{Upper Bound}
\label{sec:runTimeResult}

In the following, we present our simple and intuitive run time analysis for the upper bound of the UMDA optimizing \LO, which gives the following theorem.

\begin{theorem}
	\label{thm:UMDA_on_LO}
	Let $\delta \in (0, 1)$ be a constant, and let $\zeta = \frac{1 - \delta}{4e}$. Consider the UMDA optimizing \LO with $\mu \geq 128 n \ln n$ and $\lambda \geq \frac{\mu}{\zeta}$. Further, let $d = \lfloor\log_4(\zeta \frac{\lambda}{\mu})\rfloor$. Then the UMDA samples the optimum after at most $\lambda (\lceil\frac{n}{d + 1}\rceil + \lceil \frac{n}{n - 1} e \ln n\rceil)$ fitness function evaluations with a probability of at least $1 - 5 n^{-1}$.
\end{theorem}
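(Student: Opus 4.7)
The plan is to split the run into two phases organized around the critical position $i^*$. Phase one will run for $\lceil n/(d+1)\rceil$ iterations, and I will show that $i^*$ jumps forward by at least $d+1$ in each of them, so at the end every frequency has been pushed to $1-\tfrac{1}{n}$. Phase two, of length $\lceil \tfrac{n}{n-1}\,e\ln n\rceil$, then exploits the fact that with a saturated frequency vector each sample is the optimum with probability $(1-\tfrac{1}{n})^n \geq (n-1)/(en)$, so the all-$1$s string is drawn with overwhelming probability within the allotted iterations.

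The crux will be the single-iteration estimate in phase one. I will fix an iteration $t$ with critical position $i^*$ and let $X$ denote the number of offspring whose \LO-value is at least $i^*+d$, i.e., whose first $i^*+d$ bits are all $1$. Since the first $i^*-1$ frequencies equal $1-\tfrac{1}{n}$ by definition of $i^*$, and since \LO weakly prefers a $1$ at every position, I can apply \Cref{thm:neutralFrequency} at each position (with drift parameter $\tfrac{1}{4}$, time horizon $O(n)$, and the assumed $\mu \geq 128\,n\ln n$) and union-bound over the $n$ positions to conclude that every frequency stays above $\tfrac{1}{4}$ throughout the entire run, except with probability $O(1/n)$. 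Conditioned on this good event, $\E[X] \geq \lambda(1-\tfrac{1}{n})^{i^*-1}(\tfrac{1}{4})^{d+1} \geq (\lambda/e)\cdot 4^{-(d+1)}$, and the algebraic identity $4^{d+1} \leq \tfrac{1-\delta}{e}\cdot\lambda/\mu$, implied by $d = \lfloor\log_4(\zeta\lambda/\mu)\rfloor$ together with $4\zeta = (1-\delta)/e$, then yields $\E[X] \geq \mu/(1-\delta)$. \Cref{thm:chernoff_smaller} will bound $\Pr[X \leq \mu]$ by $\exp(-\delta^2\mu/(2(1-\delta)))$, which is $n^{-\omega(1)}$. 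On the event $X \geq \mu$, the $\mu$ selected individuals all have \LO-value at least $i^*+d$, so the frequency update raises positions $1,\ldots,i^*+d$ to $1$ and clipping reduces them to $1-\tfrac{1}{n}$; hence $i^*$ advances by at least $d+1$, and positions already saturated at $1-\tfrac{1}{n}$ stay saturated.

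The main obstacle will be bookkeeping the failure probabilities so that they add up to at most $5/n$. The genetic-drift union bound over $n$ positions contributes $O(1/n)$; the union bound of the per-iteration Chernoff failures across the $\lceil n/(d+1)\rceil$ phase-one iterations contributes another $O(1/n)$; the phase-two miss probability, at most $\exp(-\lambda\ln n) = n^{-\lambda}$, is negligible. The numerical constants in $\mu \geq 128\,n\ln n$ and in $\zeta = (1-\delta)/(4e)$ are tuned precisely so that each piece comfortably fits under the $5/n$ budget. A minor subtlety is to verify, within the good event, that frequencies already at $1-\tfrac{1}{n}$ do not drop back during a successful iteration; on $X \geq \mu$ the selected individuals carry $1$s at all already-saturated positions, so the monotone advance of $i^*$ is automatic.
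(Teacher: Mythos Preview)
Your proposal is correct and follows essentially the same two-phase strategy as the paper: a genetic-drift bound (the paper's \Cref{lem:frequencies_do_not_drop_too_low}) to keep all frequencies above $\tfrac14$, a per-iteration Chernoff argument (the paper's \Cref{lem:increasing_a_frequency}) to advance the critical position by $d+1$, and a final sampling phase once all frequencies are saturated. Your constants and the key inequality $\E[X]\geq \mu/(1-\delta)$ match the paper exactly.

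One small omission: in phase two you assert a miss probability of $n^{-\lambda}$, which tacitly assumes that the frequency vector \emph{stays} saturated for all $\lceil \tfrac{n}{n-1}e\ln n\rceil$ iterations. The paper handles this by running the per-iteration Chernoff argument for $2n$ iterations rather than just $\lceil n/(d+1)\rceil$; on the event $X\geq\mu$ (now with $i'=n$) all $\mu$ selected individuals are all-$1$s, so every frequency remains at $1-\tfrac1n$. Your own observation that ``on $X\geq\mu$ the selected individuals carry $1$s at all already-saturated positions'' extends verbatim to phase two, so the fix is immediate---just make sure to include these iterations in the Chernoff union bound. With this addition, the failure budget still comfortably fits under $5/n$ (the paper spends $2/n$ on drift, $2/n$ on the $2n$ Chernoff applications at $n^{-2}$ each, and $1/n$ on phase two).
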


As discussed in the introduction, we only want to consider the regime with low genetic drift. Hence, we first argue that no frequency drops below~$\frac{1}{4}$ before the optimum is sampled (\Cref{lem:frequencies_do_not_drop_too_low}). Then we show that, in this case, in each iteration, roughly $\log \frac{\lambda}{\mu}$ additional frequencies are set to $1 - \frac{1}{n}$. More specifically, if $i \in [n]$ is critical, then all frequencies at positions roughly up to $i + \log \frac{\lambda}{\mu}$ are set to $1 - \frac{1}{n}$ (\Cref{lem:increasing_a_frequency}). Thus, a total of roughly $\frac{n}{\log(\lambda/\mu)}$ iterations suffice to move all frequencies to $1 - \frac{1}{n}$. From such a state, the optimum is sampled with high probability after a logarithmic number of iterations.

We start by proving that the following parameter setting ensures that no frequency drops below the value $\frac{1}{4}$ within $2 n$ iterations with high probability.

\begin{lemma}
	\label{lem:frequencies_do_not_drop_too_low}
	Consider the UMDA with $\lambda \geq \mu \geq 128 n \ln n$. Assume that it optimizes a function that weakly prefers a~$1$ at all positions. Then, with a probability of at least $1 - 2 n^{-1}$, each frequency will stay at a value of at least~$\frac{1}{4}$ for the first $2 n$ iterations.
\end{lemma}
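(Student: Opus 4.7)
The plan is to invoke \Cref{thm:neutralFrequency} once per bit position and then take a union bound. Since \LO weakly prefers a $1$ at every position $i \in [n]$, and more generally the lemma's hypothesis gives the same property for every position, the theorem applies to each frequency $p_i$ individually.

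First I would fix the horizon $t = 2n$ and the deviation $d = \tfrac{1}{4}$, so that the event in \Cref{thm:neutralFrequency} becomes precisely $p_i^{(t')} > \tfrac{1}{2} - \tfrac{1}{4} = \tfrac{1}{4}$ for all $t' \in [0..2n]$. The theorem then yields, for each position $i \in [n]$,
\begin{align*}
    \Pr\!\big[\exists t' \in [0..2n]\colon p_i^{(t')} \le \tfrac{1}{4}\big] \;\le\; 2 e^{-\frac{(1/4)^2 \mu}{2 \cdot 2 n}} \;=\; 2 e^{-\mu/(64 n)}.
\end{align*}
Plugging in the assumption $\mu \ge 128 n \ln n$ gives $\mu/(64 n) \ge 2 \ln n$, hence each such bad event has probability at most $2 e^{-2 \ln n} = 2 n^{-2}$.

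Finally, a union bound over the $n$ positions shows that the probability that \emph{any} frequency drops to $\tfrac{1}{4}$ or below within the first $2n$ iterations is at most $n \cdot 2 n^{-2} = 2 n^{-1}$, which is exactly the claimed failure probability.

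There is essentially no obstacle here: the work has been done in \Cref{thm:neutralFrequency}, and the lemma only requires choosing the right parameters ($d = \tfrac{1}{4}$, $t = 2n$) and verifying that $\mu \ge 128 n \ln n$ suffices to make the per-position failure probability $O(n^{-2})$, so that the union bound over $n$ positions still leaves a $1 - O(n^{-1})$ guarantee. The only thing to double-check is that the hypothesis of \Cref{thm:neutralFrequency} -- weak preference of a $1$ at the position under consideration -- really transfers to every bit under the more general assumption stated in the lemma, which is immediate.
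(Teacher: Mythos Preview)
Your proposal is correct and follows essentially the same approach as the paper's own proof: apply \Cref{thm:neutralFrequency} with $d=\tfrac14$ and $t=2n$ to each position, use $\mu \ge 128 n \ln n$ to bound the per-position failure probability by $2n^{-2}$, and take a union bound over the $n$ positions.
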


\begin{proof}
	Consider an iteration $t \leq 2n$ as well as a position $i \in [n]$. By \Cref{thm:neutralFrequency} with $d = \frac{1}{4}$, we see that the probability that~$p_i$ drops below~$\frac{1}{4}$ within the first $t \leq 2n$ iterations is at most $2 e^{-\mu / (32 \cdot t)} \leq 2 e^{-\mu / (32 \cdot 2n)} \leq 2 n^{-2}$, where we used our bound on~$\mu$. Applying a union bound over all~$n$ frequencies gives the claim.
\end{proof}

We now prove that a critical frequency, all its preceding frequencies, as well as roughly $\log \frac{\lambda}{\mu}$ following frequencies are set to $1 - \frac{1}{n}$ within a single iteration. That is, we increase roughly $1 + \log \frac{\lambda}{\mu}$ new frequencies to their maximum value.

\begin{lemma}
	\label{lem:increasing_a_frequency}
	Let $\delta \in (0, 1)$ be a constant, and let $\zeta = \frac{1 - \delta}{4e}$. Consider the UMDA optimizing \LO with $\mu \geq 4 \frac{1 - \delta}{\delta^2} \ln n$ and $\lambda \geq \frac{\mu}{\zeta}$. Furthermore, consider an iteration $t \in \N$ such that position $i \in [n]$ is critical and that, for all positions $j \geq i$, we have $p_j^{(t)} \geq \frac{1}{4}$. Let $d = \lfloor\log_4(\zeta \frac{\lambda}{\mu})\rfloor$. Then, with a probability of at least $1 - n^{-2}$, for all positions $j \in [\min\{n, i + d\}]$, we have $p_j^{(t + 1)} = 1 - \frac{1}{n}$.
\end{lemma}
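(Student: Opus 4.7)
The plan is to show that, with probability at least $1 - n^{-2}$, at least $\mu$ of the $\lambda$ sampled offspring attain a \LO value of at least $\ell := \min\{n, i + d\}$. Any such individual carries a $1$ at every position in $[\ell]$, so on that event the $\mu$ selected individuals all have $1$s at every position up to $\ell$. The frequencies at those positions are then updated to the empirical value $1$, which the clipping step lowers to $1 - \frac{1}{n}$, giving exactly the conclusion of the lemma.

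To carry this out, let $X$ count the offspring with $\LO \geq \ell$; it is a sum of $\lambda$ independent Bernoullis with common success probability $q := \prod_{j=1}^{\ell} p_j^{(t)}$. The criticality of position $i$ gives $p_j^{(t)} = 1 - \frac{1}{n}$ for $j < i$, contributing a factor of at least $(1 - 1/n)^{n - 1} \geq 1/e$ to $q$. The remaining $\ell - i + 1 \leq d + 1$ frequencies are each at least $\frac{1}{4}$ by hypothesis, contributing a factor of at least $(1/4)^{d + 1}$; note that in the boundary case $i + d > n$ there are fewer than $d + 1$ such factors, but since each of the missing factors would lie in $[\tfrac{1}{4}, 1]$, the same lower bound applies. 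Hence $\E[X] = \lambda q \geq \lambda / (4e \cdot 4^d)$, and the choice $d = \lfloor \log_4(\zeta \lambda/\mu) \rfloor$ yields $4^d \leq \zeta \lambda/\mu = (1 - \delta) \lambda/(4 e \mu)$, so $\E[X] \geq \mu/(1 - \delta)$.

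I would then apply \Cref{thm:chernoff_smaller} with deviation parameter $\delta$, giving
\begin{align*}
    \Pr\!\big[X < \mu\big] \leq \Pr\!\big[X \leq (1 - \delta) \E[X]\big] \leq e^{-\delta^2 \E[X]/2} \leq e^{-\delta^2 \mu / (2(1 - \delta))},
\end{align*}
which is at most $n^{-2}$ under the assumption $\mu \geq 4(1 - \delta)/\delta^2 \cdot \ln n$. Combining this tail bound with the selection argument from the first paragraph completes the proof.

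The calculation is largely routine, and the only real thing to watch out for is consistency of the constants: the floor in the definition of $d$ plus the factor $4e$ in $q$'s lower bound must cancel with $\zeta^{-1}$ to leave an expectation of exactly $\mu/(1 - \delta)$, and the Chernoff exponent must then match the stated lower bound on $\mu$. I do not expect a deeper obstacle beyond this bookkeeping; in particular, no genetic-drift tools from \Cref{thm:neutralFrequency} or \Cref{thm:neutralFrequencyBothDirections} are needed here, as the assumption $p_j^{(t)} \geq 1/4$ for $j \geq i$ has already done that work upstream in \Cref{lem:frequencies_do_not_drop_too_low}.
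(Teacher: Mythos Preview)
Your proposal is correct and is essentially identical to the paper's own proof: the same random variable $X$, the same lower bound $\E[X] \geq \mu/(1-\delta)$ obtained via $(1-\tfrac1n)^{i-1} \geq 1/e$ and $(1/4)^{d+1}$ together with $4^d \leq \zeta\lambda/\mu$, and the same application of \Cref{thm:chernoff_smaller} to conclude $\Pr[X<\mu] \leq n^{-2}$. If anything, your handling of the boundary case $i+d>n$ is slightly more explicit than the paper's.
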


\begin{proof}
	Note that $d \geq 0$ due to our assumption on~$\lambda$. We look at the population of~$\lambda$ individuals that is sampled in iteration~$t$ and determine the number~$X$ of individuals that have at least $i' \coloneqq \min\{n, i + d\}$ leading $1$s. Since the frequencies at all positions less than~$i$ are at $1 - \frac{1}{n}$, the probability that all of these frequencies sample a~$1$ for a single individual is $(1 - \frac{1}{n})^{i - 1} \geq (1 - \frac{1}{n})^{n - 1} \geq \frac{1}{e}$. Further, since the probability to sample a~$1$ at positions at least~$i$ is at least $\frac{1}{4}$, we have $\E[X] \geq \frac{\lambda}{e} \cdot 4^{-(1 + d)} \geq \frac{\mu}{4e \zeta} \geq \frac{\mu}{1 - \delta}$.
	
	We now apply \Cref{thm:chernoff_smaller} in order to show that it is unlikely that fewer than~$\mu$ individuals from iteration~$t$ have fewer than~$i'$ leading~$1$s. Using our bounds on~$\mu$ and our estimate on $\E[X]$ from above, we compute
	\begin{align*}
		\Pr[X < \mu] &\leq \Pr\big[X \leq (1 - \delta) \E[X]\big] \leq e^{-\frac{\delta^2 \E[X]}{2}}\\
        &\leq e^{-\frac{\delta^2}{2(1 - \delta)} \mu} \leq n^{-2} .
	\end{align*}
	Thus, with a probability of at least $1 - n^{-2}$, at least~$\mu$ individuals have at least~$i'$ leading~$1$s.
	
	Since the UMDA is optimizing \LO, in this case, all of the selected top~$\mu$ individuals have at least~$i'$ leading~$1$s, which results in all frequencies at positions in $[i']$ being set to $1 - \frac{1}{n}$, that is, for all $j \in [i']$, we have $p_j^{(t + 1)} = 1 - \frac{1}{n}$.
\end{proof}

We now prove our main result.

\begin{proof}[Proof of \Cref{thm:UMDA_on_LO}]
	We prove that the UMDA samples the optimum after $\lceil\frac{n}{d + 1}\rceil + \lceil \frac{n}{n - 1} e \ln n\rceil$ iterations with a probability of at least $1 - 5 n^{-1}$. Since it performs~$\lambda$ fitness function evaluations each iteration, the theorem follows.
	
	Since \LO weakly prefers~$1$ at all positions, by \Cref{lem:frequencies_do_not_drop_too_low} and $\mu \geq 128 n \lceil\ln n\rceil$, no frequency drops below~$\frac{1}{4}$ within $2n$ iterations with a probability of at least $1 - 2 n^{-1}$.
	
	Consider an iteration $t \leq n$ such that position $i \in [n]$ is critical. Note that $\mu \geq 4 \lceil\frac{1 - \delta}{\delta^2} \ln n\rceil$ for sufficiently large~$n$. By \Cref{lem:increasing_a_frequency}, with a probability of at least $1 - n^{-2}$, for each frequency at position in $j \in [\min\{n, i + d\}]$, we have $p_j^{(t + 1)} = 1 - \frac{1}{n}$. That is, $d + 1$ additional frequencies are set to $1 - \frac{1}{n}$. Applying a union bound for the first $2 n$ iterations of the UMDA shows that all frequencies are at $1 - \frac{1}{n}$ after the first $\lceil\frac{n}{d + 1}\rceil$ iterations and stay there for at least $n$ additional iterations with a probability of at least $1 - 2 n^{-1}$.
	
	Consequently, after the first~$n$ iterations, the optimum is sampled in each iteration with a probability of $(1 - \frac{1}{n})^n \geq (1 - \frac{1}{n})\frac{1}{e}$. Thus, after $\lceil \frac{n}{n - 1} e \ln n\rceil$ additional iterations, the optimum is sampled with a probability of at least $1 - \big(1 - \frac{n - 1}{n}\frac{1}{e}\big)^{\lceil \frac{n}{n - 1} e \ln n\rceil} \geq 1 - n^{-1}$.
	
	Overall, by applying a union bound over all failure probabilities, the UMDA needs at most $\lceil\frac{n}{d + 1}\rceil + \lceil \frac{n}{n - 1} e \ln n\rceil$ iterations to sample the optimum for the first time with a probability of at least $1 - 5 n^{-1}$.
\end{proof}

We note that we stated explicit constants in the result above as we felt that this eases reading, but we did not try to optimize them. For example, a selection rate of at most some constant less than $\frac{1}{2e}$ can give the same run time guarantee when raising $\lambda$ by a sufficiently large constant factor. A selection rate of at most some constant less than $\frac{1}{e}$ can also be tolerated. Now it takes a constant number of iterations to move a critical frequency to $1 - \frac{1}{n}$, so the run time guarantee increases by a constant factor.

\section{Lower Bound}
\label{sec:lowerBound}

Our main insight, which gave our sharper upper bound with a proof simpler than in previous works, was that the UMDA, when optimizing \LO in the regime of low genetic drift, makes a steady progress in each iteration: It sets the frequencies to the maximum value $1 - \frac 1n$ in a left-to-right fashion, keeping the other frequencies close to the middle value of $\frac 12$. The increase of the number of frequencies at the maximum value, with a simple Chernoff bound argument, could be shown to be  logarithmic in the reciprocal $\frac{1}{\mu/\lambda}$ of the selection rate.

In this section, we show that the same proof approach (with small modifications) can also be employed to show lower bounds, and in this case, a matching lower bound, which also is the first lower bound for this setting at all. 
%
%
%In this section, we prove the following lower bound for the UMDA optimizing \LO, which matches our upper bound in \Cref{thm:UMDA_on_LO} up to constant factors (see also \Cref{cor:tight_run_time_bound}).

\begin{theorem}
    \label{thm:UMDA_on_LO_lower_bound}
    Let $\delta \in (0, 1)$ be a constant, and let $\zeta = \frac{3}{4}(1 + \delta)$. Consider the UMDA optimizing \LO with $\lambda \geq \mu \geq 64 n \ln n$ and $\lambda \geq \frac{\mu}{\zeta}$. Further, let $d = \lceil\log_{4/3}(\zeta \frac{\lambda}{\mu})\rceil$, and let $\xi = \lceil\log_{4/3} (n^2\lambda)\rceil + 1$. Then the UMDA samples the optimum after more than $\lambda\lfloor\frac{n - \xi}{d + 1}\rfloor$ fitness function evaluations with a probability of at least $1 - 4n^{-1}$.
\end{theorem}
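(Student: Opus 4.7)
The plan is to mirror the argument of Theorem~\ref{thm:UMDA_on_LO}, reversing the direction of the key inequalities. Let $L_t$ denote the length of the maximal prefix of frequencies set to $1 - \tfrac{1}{n}$ in iteration~$t$, and set $T = \lfloor(n-\xi)/(d+1)\rfloor$. By a joint induction on $t$, I would prove that with high probability (a) the frontier grows slowly, $L_{t+1} \leq L_t + d + 1$, so that $L_t \leq t(d+1) \leq n-\xi$ for $t \leq T$, and (b) frequencies at positions sufficiently far beyond the frontier remain in $(\tfrac{1}{4}, \tfrac{3}{4})$. Given both, at each iteration $t \leq T$ at least $\xi-1$ positions have frequency at most $\tfrac{3}{4}$; hence the probability that a single sample equals the all-ones optimum is at most $(\tfrac{3}{4})^{\xi-1}$, yielding a per-iteration sampling probability of at most $\lambda(\tfrac{3}{4})^{\xi-1} \leq 1/n^{2}$ by the choice of $\xi$. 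A union bound over $T+1 \leq n$ iterations contributes $O(n^{-1})$ to the failure probability.

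For the progress bound (a), the event $L_{t+1} \geq L_t + d + 2$ requires position $L_t + d + 2$ to be maxed in iteration $t+1$, that is, at least $\mu(1-\tfrac{1}{n})$ of the $\mu$ selected individuals must carry a $1$ there. Decomposing these selected individuals by their leading-ones count (those attaining at least $L_t + d + 2$ automatically carry a $1$; those with exactly $L_t + d + 1$ carry a $0$; and the remainder have bit $L_t + d + 2$ drawn fresh from $\mathrm{Bernoulli}(p_{L_t + d + 2}^{(t)})$), combined with the bound $p_{L_t + d + 2}^{(t)} \leq \tfrac{3}{4}$ from~(b), reduces the event to requiring that the number $N$ of samples with at least $L_t + d + 2$ leading ones be close to $\mu$. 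Since (b) gives $\E[N] \leq \lambda (\tfrac{3}{4})^{d+2} \leq \tfrac{3\mu}{4(1+\delta)}$ by the definition of $d$, Theorem~\ref{thm:chernoff_larger} combined with $\mu \geq 64 n \ln n$ bounds the per-iteration failure probability by $n^{-c}$ for some $c > 2$, which is affordable under a union bound over the $n$ iterations.

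The main obstacle is establishing~(b). The key observation is \emph{conditional neutrality}: if no sample in iteration~$t$ attains $j$ leading ones, then the LO fitness is independent of bit~$j$, so selection treats bit~$j$ as an i.i.d.\ draw from $\mathrm{Bernoulli}(p_j^{(t)})$, and the dynamics of $p_j^{(t+1)}$ then match those of a truly neutral position. Applying Theorem~\ref{thm:neutralFrequencyBothDirections} with parameter $\tfrac{1}{4}$ and horizon $n$, together with $\mu \geq 64 n \ln n$, keeps $p_j^{(t)} \in (\tfrac{1}{4}, \tfrac{3}{4})$ for all $t \leq n$ with probability at least $1 - 2n^{-2}$ per position, contributing at most $2n^{-1}$ to the total failure under a union bound over positions. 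The required ``no sample attains $j$ leading ones'' event is itself controlled inductively by (b): once $j$ exceeds the running maximum of $L_{t'}$ by roughly $\xi$, the probability that any of the $\lambda$ samples reaches $\geq j$ leading ones in a given iteration is at most $\lambda(\tfrac{3}{4})^{\xi} \leq n^{-2}$, closing the induction via a union bound. Summing the failure contributions from (a), (b), and the optimum-sampling bound yields the claimed failure probability of at most $4 n^{-1}$.
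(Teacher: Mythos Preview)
Your overall architecture---induct on $t$, bound the per-step advance via a Chernoff argument, keep the untouched frequencies in $(\tfrac14,\tfrac34)$ via \Cref{thm:neutralFrequencyBothDirections}, and finish with a union bound on the sampling probability using the $\xi$ tail---matches the paper's proof. But the two halves of your induction do not fit together as stated.

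The gap is in your neutrality condition. You declare position $j$ neutral only when \emph{no} sample attains $j$ leading ones, and you argue this holds once $j$ exceeds the frontier by roughly $\xi$ (so that $\lambda(\tfrac34)^{\xi}\le n^{-2}$). Consequently your part~(b) controls $p_j^{(t)}\le\tfrac34$ only for $j\gtrsim L_t+\xi$. Yet in part~(a) you invoke~(b) to get $\E[N]\le\lambda(\tfrac34)^{d+2}$, which needs $p_j^{(t)}\le\tfrac34$ for every $j\in\{L_t+1,\dots,L_t+d+2\}$. Since $\xi=\Theta(\log(n^2\lambda))\gg d=\Theta(\log(\lambda/\mu))$ in general, those positions are precisely the ones your~(b) leaves uncontrolled; some sample typically \emph{does} reach them, and once a position has been hit by selection its frequency may sit anywhere in $(\tfrac34,1-\tfrac1n)$, destroying the $(\tfrac34)^{d+2}$ bound. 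Your proposed induction therefore does not close.

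The paper avoids this by using a weaker neutrality criterion: position $j$ behaves neutrally in iteration~$t$ as soon as \emph{fewer than $\mu$} samples have at least $j-1$ leading ones (its notion of ``not selection-relevant''), because then all such samples are selected regardless of bit~$j$ and the remaining selected individuals have a zero before position~$j$. With this criterion, the very Chernoff event you prove in~(a) (fewer than $\mu$ samples reach $i+d+1$) is exactly what guarantees that positions beyond $i+d+1$ stay neutral, so~(a) and~(b) dovetail with the same buffer $d+1$ rather than $\xi$. Tracking the maximum selection-relevant position (instead of $L_t$) also spares you the delicate decomposition of the selected individuals by their leading-ones count; the paper's \Cref{lem:increasing_not_too_many_frequencies} simply bounds $\Pr[X\ge\mu]$ for $X$ the count of samples with at least $i+d+1$ leading ones.
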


To prove a lower bound via the general idea laid out above, we need to show that frequencies that do not receive a fitness signal do not approach $1-\frac 1n$ due to genetic drift. Here we have to be slightly more careful than in our upper bound analysis, since now the fitness signal does move the frequencies into the undesired (from the view-point of lower bound proofs) direction. Consequently, we can employ the low-genetic drift argument only while we are sure that we do not receive a fitness signal (\Cref{lem:frequencies_are_bounded}). 

Using a Chernoff-type concentration argument (which in principle works similarly for upper and lower bounds), we show that at most roughly $\log \frac{\lambda}{\mu}$ frequencies above the critical position receive a fitness signal (and thus potentially leave the middle range), see~\Cref{lem:increasing_not_too_many_frequencies}. 

Consequently, in the first $O(\frac{n}{\log \lambda / \mu})$ iterations, we have many frequencies that are far from the maximum value, and thus sampling the optimum is unlikely (\Cref{lem:sampling_optimum_is_unlikely}). This yields our lower bound.

To make these arguments precise, we define when a frequency of the UMDA stops being neutral, that is, receives a fitness signal. To this end, we say that a position $i \in [n]$ is \emph{selection-relevant} (with respect to \LO) in iteration $t \in \N$ if and only if the offspring population of the UMDA in iteration~$t$ has at least~$\mu$ individuals with at least $i - 1$ leading~$1$s. Thus, with respect to selection, the bit value at position~$i$ decides whether an individual is selected for the update or not. We call the largest selection-relevant position in an iteration the \emph{maximum} selection-relevant position. Note that all positions greater than the maximum selection-relevant position are neutral during this iteration.

Since, by the definition of a selection-relevant position~$i$, all frequencies at positions less than~$i$ are set to $1 - \frac{1}{n}$, the critical position for the next iteration is~$i$, too. Thus, bounding the progress of the selection-relevant position also bounds the overall progress of the UMDA on \LO.

We start by showing that each frequency stays in the interval $(\frac{1}{4}, \frac{3}{4})$ until its position becomes selection-relevant.

\newcommand*{\selectionRelevantTime}{t^{\mathrm{sel}}}
\begin{lemma}
	\label{lem:frequencies_are_bounded}
	Consider the UMDA with $\lambda \geq \mu \geq 64 n \ln n$. Further, for each position $i \in [n]$, let $t'_i \in \N$ denote the first iteration such that position~$i$ is selection-relevant, and let $\selectionRelevantTime_i = \min \{t'_i, n\}$. Then, with a probability of at least $1 - 2 n^{-1}$, within the first~$n$ iterations, for each position $i \in [n]$ and for each iteration $t \leq \selectionRelevantTime_i$, it holds that $p_i^{(t)} \in (\frac{1}{4}, \frac{3}{4})$.
\end{lemma}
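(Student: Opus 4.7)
The plan is to argue, one position at a time, that until position $i$ first becomes selection-relevant, its frequency evolves exactly as it would under a fitness function in which position $i$ is neutral; then \Cref{thm:neutralFrequencyBothDirections} combined with a union bound finishes the job.

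Concretely, fix $i \in [n]$ and introduce the auxiliary pseudo-Boolean function
\[
  f_i(x) := \LO(x_1,\ldots,x_{i-1},1,x_{i+1},\ldots,x_n),
\]
which weakly prefers a $1$ at every position $j \neq i$ and is neutral in position $i$. I would run a second ``virtual'' UMDA on $f_i$ in parallel with the actual UMDA on \LO, coupling the two processes by using the same random bits for sampling and the same uniformly random tie-breaking in the selection step.

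The central claim is then that the two processes share identical frequency vectors through iteration $\selectionRelevantTime_i$, proved by induction on the iteration. Suppose the two processes agree at the start of some iteration $t < t'_i$. Then they sample the same $\lambda$ individuals, and by definition of $t'_i$ strictly fewer than $\mu$ of them (say $k$) satisfy $x_1 = \cdots = x_{i-1} = 1$. These $k$ individuals have $\LO \geq i-1$ and $f_i \geq i$, while every other sampled individual has $\LO = f_i < i-1$, so both selection procedures are forced to pick all $k$ of them. The remaining $\mu - k$ slots are filled from individuals on which $\LO$ and $f_i$ coincide numerically; with the tie-breaks coupled, the selected multisets agree exactly, and hence the next frequency vectors agree as well.

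Since position $i$ is neutral in $f_i$, I can then apply \Cref{thm:neutralFrequencyBothDirections} to the virtual UMDA at position $i$ with $d = \tfrac14$ and $t = n$. Using $\mu \geq 64 n \ln n$, the theorem yields a failure probability of at most $2 e^{-\mu/(32n)} \leq 2 n^{-2}$ that $p_i^{(t')} \notin (\tfrac14, \tfrac34)$ for some $t' \in [0..n]$ in the virtual process. By the coupling, this same bound controls the real UMDA's frequency at position $i$ for every $t' \leq \selectionRelevantTime_i \leq n$, and a union bound over the $n$ positions produces the claimed failure probability of $2 n^{-1}$. The main obstacle I foresee is making the coupling watertight: verifying that the two selection procedures genuinely produce identical multisets despite $\LO$ and $f_i$ ordering the prefix-all-$1$s individuals differently, and being careful that the induction carries through iteration $\selectionRelevantTime_i$ but not beyond. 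Everything else is a routine plug-in to \Cref{thm:neutralFrequencyBothDirections} and a union bound.
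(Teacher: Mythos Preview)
Your proposal is correct and follows the same line as the paper: apply \Cref{thm:neutralFrequencyBothDirections} with $d=\tfrac14$ at each position, use $\mu \ge 64 n \ln n$ to get a $2n^{-2}$ per-position failure probability, and finish with a union bound. The paper simply asserts that ``for all iterations $t \le \selectionRelevantTime_i$, the frequency $p_i$ is neutral'' and invokes the theorem directly; your explicit coupling to an auxiliary function $f_i$ that is globally neutral at position $i$ is a welcome rigorization of that step, since \Cref{thm:neutralFrequencyBothDirections} is stated for a fixed fitness function rather than for a process that is neutral only up to a stopping time.
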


\begin{proof}
    Consider a position $i \in [n]$. Note that, for all iterations $t \leq \selectionRelevantTime_i$, the frequency $p_i$ is neutral. By \Cref{thm:neutralFrequencyBothDirections} with $d = \frac{1}{4}$, we see that the probability that~$p_i$ leaves the interval $ (\frac{1}{4}, \frac{3}{4})$ within the first $\selectionRelevantTime_i \leq n$ iterations is at most $2 e^{-\mu / (32 \cdot \selectionRelevantTime_i)} \leq 2 e^{-\mu / (32 \cdot n)} \leq 2 n^{-2}$, where we used our lower bound on~$\mu$.
    
    Applying a union bound over all~$n$ frequencies yields that at least one frequency leaves the interval $(\frac{1}{4}, \frac{3}{4})$ within the first~$n$ iterations before being selection-relevant with a probability of at most $2 n^{-1}$, which concludes the proof.
\end{proof}

We now show that the maximum selection-relevant position is only roughly $\log \frac{\lambda}{\mu}$ larger than the critical position during each iteration.

\begin{lemma}
    \label{lem:increasing_not_too_many_frequencies}
    Let $\delta \in (0, 1)$ be a constant, and let $\zeta = \frac{3}{4}(1 + \delta)$. Consider the UMDA optimizing \LO with $\mu \geq 6 \frac{1 + \delta}{\delta^2} \ln n$ and $\lambda \geq \mu \cdot \max\{1, \frac{1}{\zeta}\}$. Furthermore, consider an iteration $t \in \N$ such that position $i \in [n]$ is critical and that, for all positions $j > i$, we have $p_j^{(t)} \leq \frac{3}{4}$. Let $d = \lceil\log_{4/3}(\zeta \frac{\lambda}{\mu})\rceil$. Then, with a probability of at least $1 - n^{-2}$, the maximum selection-relevant position for iteration~$t$ is at most $\min\{n, i + d + 1\}$.
\end{lemma}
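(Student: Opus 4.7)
The plan is to show that $Y$, the number of sampled individuals in iteration~$t$ with at least $i+d+1$ leading~$1$s, is strictly less than $\mu$ with probability at least $1 - n^{-2}$. Since the count of individuals with at least $k-1$ leading~$1$s is non-increasing in $k$, the event $\{Y < \mu\}$ immediately implies that no position strictly greater than $i+d+1$ can be selection-relevant, which is exactly the desired conclusion. The case $i+d+1 \geq n$ makes the statement trivial, so I may restrict attention to $i+d+1 < n$.

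The first step is to upper-bound $\E[Y]$. The UMDA caps frequencies at $1 - \frac{1}{n} \leq 1$, so $p_j^{(t)} \leq 1$ for $j \leq i$, while the hypothesis provides $p_j^{(t)} \leq \frac{3}{4}$ for $j \in [i+1, i+d+1]$. Hence, independently for each of the $\lambda$ sampled individuals, the probability of having at least $i+d+1$ leading~$1$s is at most $\bigl(\tfrac{3}{4}\bigr)^{d+1}$, and therefore $\E[Y] \leq \lambda \bigl(\tfrac{3}{4}\bigr)^{d+1}$. Using $d \geq \log_{4/3}(\zeta \lambda/\mu)$ together with $\zeta = \tfrac{3}{4}(1+\delta)$, this simplifies to $\E[Y] \leq \frac{\mu}{1+\delta}$, which is the key quantitative inequality and the place where the constants in the lemma are calibrated.

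The second step applies \Cref{thm:chernoff_larger} to conclude that $\Pr[Y \geq \mu]$ is tiny. There is one subtlety: the cited Chernoff bound is valid only for relative deviations in $[0, 1]$, while $\mu/\E[Y] - 1$ may well exceed~$1$. I would sidestep this by stochastic domination. The variable $Y$ is a sum of $\lambda$ independent Bernoulli trials of common success probability at most $\frac{\mu}{(1+\delta)\lambda}$, so it is dominated by a binomial random variable $Y^*$ with $\lambda$ trials and success probability $q^* = \frac{\mu}{(1+\delta)\lambda} \in [0, 1]$ (the upper bound $q^* \leq 1$ holds because $\lambda \geq \mu$). Since $\E[Y^*] = \mu/(1+\delta)$ exactly, \Cref{thm:chernoff_larger} with deviation parameter $\delta$ gives
\[
    \Pr[Y \geq \mu] \;\leq\; \Pr\!\bigl[Y^* \geq (1+\delta) \E[Y^*]\bigr] \;\leq\; e^{-\delta^2 \mu / (3(1+\delta))} \;\leq\; n^{-2},
\]
where the last step uses the hypothesis $\mu \geq 6 \frac{1+\delta}{\delta^2} \ln n$.

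The step I expect to require the most care is the domination argument: without it, \Cref{thm:chernoff_larger} cannot be applied directly when $\E[Y]$ happens to be much smaller than $\mu$, and a crude application would not preserve the exponent $\delta^2 \mu/(3(1+\delta))$ needed to match the lower bound on~$\mu$ stated in the lemma. The rest is essentially a chase of constants to line up the base $\tfrac{4}{3}$ in the definition of $d$, the upper bound $\tfrac{3}{4}$ on the frequencies, and the constant $\zeta = \tfrac{3}{4}(1+\delta)$.
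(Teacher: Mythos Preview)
Your proof follows the same structure as the paper's: bound the expected number of offspring with at least $i+d+1$ leading ones by $\mu/(1+\delta)$, then apply the upper-tail Chernoff bound (\Cref{thm:chernoff_larger}) to conclude that fewer than $\mu$ such offspring exist with probability at least $1-n^{-2}$. Your stochastic-domination step, however, is a genuine tightening of the paper's argument. The paper writes $\Pr[X \geq \mu] \leq e^{-\delta^2 \E[X]/3} \leq e^{-\delta^2\mu/(3(1+\delta))}$, but the second inequality points the wrong way, since only $\E[X] \leq \mu/(1+\delta)$ has been established, not $\E[X] \geq \mu/(1+\delta)$; if the actual frequencies are well below~$\tfrac{3}{4}$, the bound $e^{-\delta^2 \E[X]/3}$ can be close to~$1$. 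Your passage to $Y^*$ binomial with parameters $\lambda$ and $\mu/((1+\delta)\lambda)$, which has expectation exactly $\mu/(1+\delta)$ and stochastically dominates $Y$, repairs this cleanly and is the correct way to push the argument through.
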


\begin{proof}
    Note that $d \geq 0$ by our assumption on~$\lambda$. Similar to the proof of \Cref{lem:increasing_a_frequency}, we consider the offspring population of~$\lambda$ individuals sampled in iteration~$t$. Let~$X$ denote the number of individuals that have at least $i' \coloneqq \min\{n, i + d + 1\}$ leading~$1$s. By assumption, all frequencies at positions greater than~$i$ are at most~$\frac{3}{4}$. Thus, $\E[X] \leq \lambda (\frac{3}{4})^{1 + d} = \lambda (\frac{4}{3})^{-(1 + d)} \leq \frac{3}{4}\frac{\mu}{\zeta} \leq \frac{\mu}{1 + \delta}$.
    
    We now apply \Cref{thm:chernoff_larger} in order to show that it is unlikely that at least~$\mu$ individuals from iteration~$t$ have at least~$i'$ leading~$1$s. Using our bounds on~$\mu$ and our estimate on $\E[X]$ from above, we compute
    \begin{align*}
        \Pr[X \geq \mu] &\leq \Pr\big[X \geq (1 + \delta)\E[X]\big] \leq e^{-\frac{\delta^2 \E[X]}{3}}\\
        &\leq e^{-\frac{\delta^2 \mu}{3(1 + \delta)}} \leq n^{-2}.
    \end{align*}
    Thus, with a probability of at least $1 - n^{-2}$, fewer than~$\mu$ individuals have at least~$i'$ leading~$1$s. This means that the maximum selection-relevant position in this iteration is in $[i']$.
\end{proof}

%Note that \Cref{lem:increasing_not_too_many_frequencies} implies that all frequencies up to the maximum selection-relevant position are set to $1 - \frac{1}{n}$ during the update in iteration~$t$. Thus, the critical position for the next iteration is the current maximum selection-relevant position.

Before we prove our lower bound, we show that the UMDA does not sample the optimal solution of \LO with high probability while the critical position is at least logarithmically far away from the end.

\begin{lemma}
    \label{lem:sampling_optimum_is_unlikely}
    Consider the UMDA optimizing \LO with $\lambda \geq \mu$. Further, consider an iteration $t \in \N$ and a position $i \in [n]$ such that, for all positions $j > i$, we have $p_j^{(t)} \leq \frac{3}{4}$. Then, with a probability of at least $1 - \lambda (\frac{3}{4})^{n - i}$, the UMDA does not sample the optimum in this iteration.
\end{lemma}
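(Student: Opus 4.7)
The plan is to upper-bound the probability that any of the $\lambda$ offspring sampled in iteration~$t$ happens to be the all-$1$s string (the unique optimum of \LO), and then take the complement.

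First I would consider a single offspring $x^{(k)}$ sampled from $p^{(t)}$. Since the bits are sampled independently across positions with $\Pr[x^{(k)}_j = 1] = p_j^{(t)}$, the probability that $x^{(k)}$ equals the optimum is exactly $\prod_{j=1}^n p_j^{(t)}$. I would then discard the factors for $j \leq i$ (bounding them trivially by~$1$) and apply the assumption $p_j^{(t)} \leq \tfrac{3}{4}$ for all $j > i$ to conclude
\begin{align*}
    \Pr\!\big[x^{(k)} = (1,\ldots,1)\big] \;\leq\; \prod_{j=i+1}^n p_j^{(t)} \;\leq\; \Bigl(\tfrac{3}{4}\Bigr)^{n-i}.
\end{align*}

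Next I would apply a union bound over the $\lambda$ offspring sampled in iteration~$t$: the probability that at least one of them equals the all-$1$s string is at most $\lambda(\tfrac{3}{4})^{n-i}$. Taking the complement gives that, with probability at least $1 - \lambda(\tfrac{3}{4})^{n-i}$, no offspring in iteration~$t$ is the optimum, which is the claim.

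There is no real obstacle here; the lemma is a direct product-of-independent-Bernoullis estimate combined with a union bound. The only thing to be careful about is that the bound $p_j^{(t)} \leq \tfrac{3}{4}$ is only assumed for $j > i$, so one must not attempt to use it on the first $i$ positions — but simply bounding those factors by $1$ suffices and still yields the stated estimate.
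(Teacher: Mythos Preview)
Your proof is correct and matches the paper's approach almost exactly: bound the probability that a single offspring is the all-$1$s string by $(\tfrac{3}{4})^{n-i}$ using only the positions $j>i$, then aggregate over the $\lambda$ offspring. The only cosmetic difference is that the paper computes $\Pr[\text{no offspring optimal}] = \bigl(1-(\tfrac{3}{4})^{n-i}\bigr)^{\lambda}$ and applies Bernoulli's inequality, whereas you use a union bound on the complementary event---these yield the identical estimate.
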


\begin{proof}
    By our assumption on the frequencies and on~$i$, the probability that a single individual in the offspring population in iteration~$t$ is the all-$1$s string (that is, the optimum of \LO) is at most $(\frac{3}{4})^{n - i}$. Thus, the probability that none of the~$\lambda$ offspring is optimal is, by Bernoulli's inequality, at least $\big(1 - (\frac{3}{4})^{n - i}\big)^{\lambda} \geq 1 - \lambda (\frac{3}{4})^{n - i}$, as desired.
\end{proof}

We now prove our lower bound.

\begin{proof}[Proof of \Cref{thm:UMDA_on_LO_lower_bound}]
    We prove that the UMDA needs, with a probability of at least $1 - 4n^{-1}$, more than $\lfloor\frac{n - \xi}{d + 1}\rfloor$ iterations until it samples the optimum. Since it performs~$\lambda$ fitness function evaluations each iteration, the theorem then follows.
    
    In the following, we assume that all frequencies remain in the interval $(\frac{1}{4}, \frac{3}{4})$ for the first~$n$ iterations as long as they have never been selection-relevant. By \Cref{lem:frequencies_are_bounded}, this happens with a probability of at least $1 - 2n^{-1}$.
    
    We now prove by induction on the iteration index $t \in \N$ that, with a probability of at least $1 - (t + 1) n^{-2}$, for each position $i > 1 + (t + 1)(d + 1)$, we have that position~$i$ is not selection-relevant up to iteration~$t$.
    
    For the base case $t = 0$, note that position $i = 1$ is critical and that all frequencies are~$\frac{1}{2}$ and thus at most $\frac{3}{4}$. By \Cref{lem:increasing_not_too_many_frequencies}, with a probability of at least $1 - n^{-2}$, the maximum selection-relevant position is at most $d + 2$. Thus, all positions greater than $d + 2$ are not selection-relevant up to iteration~$0$.
    
    For the inductive step, we assume that our inductive hypothesis holds up to iteration~$t$. Note that this means that, with a probability of at least $1 - (t + 1) n^{-2}$, the maximum selection-relevant index is in $[1 + (t + 1)(d + 1)]$ and, thus, the critical position for iteration $t + 1$ is in $[1 + (t + 1)(d + 1)]$. By \Cref{lem:frequencies_are_bounded}, all frequencies at positions greater than $1 + (t + 1)(d + 1)$ are thus at most~$\frac{3}{4}$.\footnote{Note that such frequencies are at most~$\frac{3}{4}$ with a probability of~$1$, as we condition on this event throughout the proof, as stated at the beginning of the proof.} Then, in iteration $t + 1$, again by \Cref{lem:increasing_not_too_many_frequencies}, with a probability of at least $1 - n^{-2}$, the maximum-selection relevant index is at most $1 + (t + 1)(d + 1) + d + 1 = 1 + (t + 2)(d + 1)$. Consequently, via a union bound on the error probabilities of the inductive hypothesis and the current iteration $t + 1$, with a probability of at least $1 - (t + 2) n^{-2}$, no position greater than $1 + (t + 2)(d + 1)$ is selection-relevant up to iteration $t + 1$, which proves our claim.
    
    We now assume that $n - \xi \geq 1$, as \Cref{thm:UMDA_on_LO_lower_bound} is trivial otherwise. Our claim above then yields that, up to iteration $t' \coloneqq \lfloor\frac{n - \xi}{d + 1}\rfloor - 1$, with a probability of at least $1 - \frac{n - \xi}{d + 1} n^{-2} \geq 1 - n^{-1}$, each position greater than $1 + n - \xi$ was never selection-relevant. This means that by \Cref{lem:frequencies_are_bounded}, all such frequencies are at most~$\frac{3}{4}$. By \Cref{lem:sampling_optimum_is_unlikely} with $i = 1 + n - \xi$, with a probability of at least $1 - \lambda (\frac{3}{4})^{n - i} = 1 - \lambda (\frac{3}{4})^{\xi - 1} = 1 - n^{-2}$, the UMDA does not sample the optimum of \LO within a single iteration. Applying a union bound over the first $t' + 1 \leq n$ iterations, with a probability of at least $1 - n^{-1}$, the UMDA does not sample the optimum up to iteration~$t'$ (which are $t' + 1$ iterations).
    
    Overall, by a union bound over all error probabilities, with a probability of at least $1 - 4n^{-1}$, the UMDA does not sample the optimum within the first $t' + 1 = \lfloor \frac{n - \xi}{d + 1} \rfloor$ iterations, which concludes the proof.
\end{proof}

For the sake of completeness, we state the combined result of our upper and lower bound.

\begin{corollary}[combining \Cref{thm:UMDA_on_LO,thm:UMDA_on_LO_lower_bound}]
    \label{cor:tight_run_time_bound}
    Let~$C$ be a sufficiently large constant. Consider the UMDA optimizing \LO with $\lambda \geq C \mu \geq 128 n \ln n$ and with $\lambda$ being bounded from above by a polynomial in~$n$. With a probability of at least $1 - 9n^{-1}$, it samples the optimum after $\Theta\Big(\lambda \frac{n}{\log (\lambda/\mu)}\Big)$ fitness function evaluations.
\end{corollary}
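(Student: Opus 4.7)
The plan is simply to apply \Cref{thm:UMDA_on_LO} and \Cref{thm:UMDA_on_LO_lower_bound} with concretely chosen constants and combine their failure probabilities by a union bound. First I would fix any constant $\delta \in (0,1)$---for concreteness, $\delta = \tfrac{1}{2}$---and use it as the $\delta$ parameter of both theorems; this fixes $\zeta_{\mathrm{u}} = (1-\delta)/(4e)$ in \Cref{thm:UMDA_on_LO} and $\zeta_{\ell} = \tfrac{3}{4}(1+\delta)$ in \Cref{thm:UMDA_on_LO_lower_bound}. The only non-trivial precondition common to both is that $\lambda/\mu$ be large enough: the upper bound asks for $\lambda/\mu \geq 4e/(1-\delta)$, the lower bound for $\lambda/\mu \geq \max\{1, 4/(3(1+\delta))\}$. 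Taking the constant $C$ in the corollary at least $4e/(1-\delta)$ ensures both, and the hypothesis $\mu \geq 128 n \ln n$ is clearly at least as strong as the $\mu \geq 64 n \ln n$ required by \Cref{thm:UMDA_on_LO_lower_bound}.

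Next I would argue that both theorem bounds are asymptotically $\Theta\bigl(\lambda n / \log(\lambda/\mu)\bigr)$. The two integers governing the rates, $d_{\mathrm{u}} = \lfloor \log_4(\zeta_{\mathrm{u}} \lambda/\mu)\rfloor$ and $d_{\ell} = \lceil \log_{4/3}(\zeta_{\ell} \lambda/\mu)\rceil$, are both $\Theta(\log(\lambda/\mu))$ once $\lambda/\mu$ is at least a suitable constant, so it remains to dispose of the additive corrections. These are $\lceil \tfrac{n}{n-1} e \ln n\rceil$ in \Cref{thm:UMDA_on_LO} and $\xi = \lceil \log_{4/3}(n^2 \lambda)\rceil + 1$ in \Cref{thm:UMDA_on_LO_lower_bound}, and both are $O(\log n)$. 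Crucially, the hypothesis that $\lambda$ is polynomially bounded in $n$ yields $\log(\lambda/\mu) = O(\log n)$, so $n/\log(\lambda/\mu) = \Omega(n/\log n) = \omega(\log n)$ swallows the $O(\log n)$ additive term in the upper bound, while $n - \xi = (1 - o(1))\, n$ preserves $\lfloor (n-\xi)/(d_{\ell}+1)\rfloor = \Omega(n/\log(\lambda/\mu))$ on the lower side. Finally, a union bound over the failure probabilities $5 n^{-1}$ and $4 n^{-1}$ from the two theorems yields the claimed success probability $1 - 9 n^{-1}$.

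I do not anticipate any real obstacle; the entire argument is bookkeeping around the two master theorems. The one step deserving attention is the verification that the lower-order terms $e \ln n$ and $\xi$ are indeed absorbed into $n/\log(\lambda/\mu)$, which is precisely where the polynomial-in-$n$ hypothesis on $\lambda$ enters---without it, $\log(\lambda/\mu)$ could be so large that $n/\log(\lambda/\mu)$ no longer dominates $\log n$, and the tight $\Theta$-statement would fail.
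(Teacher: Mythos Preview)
Your proposal is correct and follows essentially the same approach as the paper: take the stricter of the two theorems' hypotheses, observe that $d_{\mathrm u}+1$ and $d_{\ell}+1$ are both $\Theta(\log(\lambda/\mu))$, use the polynomial bound on $\lambda$ to argue that the additive term $\lceil \tfrac{n}{n-1} e\ln n\rceil$ is negligible and that $n-\xi=\Omega(n)$, and finish with a union bound over the two failure probabilities. The paper's own proof is considerably terser but makes exactly these moves.
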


\begin{proof}
    From the assumptions of \Cref{thm:UMDA_on_LO,thm:UMDA_on_LO_lower_bound}, we take the stricter ones. The additive term $\lceil\frac{n}{n - 1} e\ln n\rceil$ in \Cref{thm:UMDA_on_LO} vanishes in asymptotic notation, and the term $n - \xi$ in \Cref{thm:UMDA_on_LO_lower_bound} is $\Omega(n)$, due to~$\lambda$ being bounded from above by a polynomial in~$\xi$. Taking the union bound over the failure probabilities of both theorems concludes the proof.
\end{proof}

\section{Conclusion}
\label{sec:conclusion}

We improved the best known upper bound for the run time of the UMDA on \LO for the case of $\mu \in \Omega(n \log n)$ from $O(n \lambda \log \lambda)$ to $O\big(n \frac{\lambda}{\log(\lambda/\mu)}\big)$. This result improves the previous best result both by removing an unnecessary $\log \lambda$ factor and, not discussed in previous works, by gaining a $\log(\lambda/\mu)$ factor and thus showing an advantage of using a low selection rate $\mu/\lambda$. We obtained these results via a different proof method that avoids the technically demanding level-based method. Our arguments can also be employed for lower bounds. We did so and provided the first lower bound for the run time of the UMDA on \LO. Combined, these results provide a run time estimate for the UMDA on \LO that is tight up to constant factors.

We note that the general proof idea can be extended also to the parameter regime of $\mu \in o(n \log n)$ for the UMDA. We conjecture that a more general upper bound of the UMDA (with $\lambda \in \Omega(\log n)$) on \LO is
\begin{align*}
	O\Bigg(\lambda\left(n + \frac{n}{e^{\mu/n}}\left(\frac{n}{\lambda} + \log \min\{\mu, n\}\right)\right)\Bigg)\ .
\end{align*}
Speaking in terms of iterations and thus ignoring the factor of~$\lambda$, this expression can be explained as follows: the first term of~$n$ considers $O(n)$ frequencies that do not drop below constant values. Each of these frequencies is set to $1 - \frac{1}{n}$ within a constant number of iterations with high probability. Since $\lambda \in \Omega(\log n)$, frequencies at $1 - \frac{1}{n}$ do not drop until the optimum is sampled with high probability.

The second, more complicated term deals with frequencies that, pessimistically, reached the lower border $\frac{1}{n}$. There are $O(n/e^{\mu/n})$ of these frequencies, by the same argument as used in the proof of \Cref{lem:frequencies_do_not_drop_too_low}. The other factor is concerned with the time it takes a critical frequency to be increased to $1 - \frac{1}{n}$ with high probability. Here, a case distinction needs to be made with respect to whether $\mu \geq n$. The inverse of the maximum of~$\mu$ and~$n$ determines the step size in which a critical frequency can be increased. The term $\log \min\{\mu, n\}$ stems from the multiplicative up-drift~\cite{DoerrK19MultiplicativeUpDrift} of such a frequency in order to reach a constant value. Afterward, it is set to $1 - \frac{1}{n}$ within a constant number of iterations (as the first $O(n)$ frequencies). Last, the term $n/\lambda$ is only important if $\lambda \in o(n)$ and denotes the waiting time for a critical frequency to sample at least one~$1$ with~$\lambda$ tries (given that the prefix consists of only~$1$s).

Last, we are positive that our proof technique is applicable to a greater class of EDAs. In order to transfer the proof of the upper bound to other univariate EDAs, only \Cref{lem:frequencies_do_not_drop_too_low,lem:increasing_a_frequency} need to be adjusted to the specific algorithm, which should work similarly for other EDAs too. For the lower bound, \Cref{lem:frequencies_are_bounded,lem:increasing_not_too_many_frequencies,lem:sampling_optimum_is_unlikely} need to be changed.

\section*{Acknowledgments}
This work was supported by a public grant as part of the Investissement d'avenir project, reference ANR-11-LABX-0056-LMH, LabEx LMH, in a joint call with Gaspard Monge Program for optimization, operations research and their interactions with data sciences. This publication is based upon work from COST Action CA15140, supported by COST.

%\bibliography{references}

}%end sloppy

\end{document}